\documentclass[letterpaper,10pt,conference]{ieeeconf}
\IEEEoverridecommandlockouts       
\usepackage{amssymb}
\usepackage{amsmath}

\usepackage{amsthm}
\usepackage[english]{babel}
\usepackage{float}
\usepackage{graphicx}
\usepackage{hyperref}
\usepackage{mathtools}
\usepackage{booktabs}
\usepackage{afterpage}
\usepackage{dblfloatfix}
\usepackage{colortbl}
\usepackage{tikz}

\usepackage{filecontents}
\usepackage[noadjust]{cite}
\usepackage[font=footnotesize,labelfont=footnotesize]{subcaption}
\usepackage[font=footnotesize,labelfont=footnotesize]{caption}
\usepackage{comment}
\usepackage{authblk}
\usepackage{multirow}
\usepackage{xcolor}
\usepackage{algorithm}
\usepackage{algorithmic}
\usepackage{xspace}
\usepackage{graphicx}
\usepackage[english]{babel}
\usepackage[export]{adjustbox}

\usepackage{subcaption}
\definecolor{darkblue}{rgb}{0.15,0.15,0.55}
\definecolor{lightgrey}{rgb}{0.75,0.75,0.75}

\newtheorem{theorem}{Theorem}

\theoremstyle{definition}

\DeclareRobustCommand{\stepnum}[1]{%
  \tikz[baseline=(char.base)]{
    \node[
      circle,
      fill=black,
      text=white,
      inner sep=0.7pt,
      font=\scriptsize\bfseries
    ] (char) {#1};
  }%
}

\begin{document}
\title{\LARGE \bf Capability-Conditioned Graph Control Barrier Functions for Zero-Shot Safe Navigation of Heterogeneous Mobile Robot Teams}

\author{
Joonkyung Kim$^1$, Janghyun Cho$^2$, Jeehyun Lee$^3$,
Changjoo Nam$^3$, Yiwei Lyu$^{1,*}$
\thanks{
$^1$Dept. of Computer Science and Engineering, Texas A\&M University.
$^2$Dept. of Artificial Intelligence, Sogang University.
$^3$Dept. of Electronic Engineering, Sogang University.
$^*$Corresponding author: {\tt\small yiweilyu@tamu.edu}
}
}


\maketitle 

\begin{abstract}

Multi-robot systems are increasingly deployed as heterogeneous fleets
with diverse kinematic and actuation capabilities. While graph control
barrier function (GCBF) methods provide a scalable framework for
decentralized safe control, existing approaches typically assume
homogeneous robot dynamics. Methods that directly encode robot classes
can distinguish known types, but do not represent state-dependent motion
capabilities or naturally transfer to unseen robot models.
To address this challenge, we propose the
\textit{Capability-Conditioned Heterogeneous Graph Control Barrier
Function (CC-HGCBF)}, defined in a shared second-order reference-point
operational space. This formulation provides a common geometric
representation of safety across heterogeneous robots while retaining
their model-specific dynamics and admissible control constraints.
To enable zero-shot transfer, we introduce the Radial Capability Field
(RCF), a compact, fixed-dimensional representation of each robot's
state-dependent finite-horizon reachable velocity set. Conditioning the
shared graph barrier and decentralized policy on the RCF enables them to
adapt to changes in robot state, kinematic constraints, and control
limits without relying on predefined robot identities.
We jointly train the barrier and policy using proactive safety labeling
and neural CBF-QP supervision, and evaluate them on mixed teams containing
both seen and unseen robot models. Experiments demonstrate scalable safe
coordination and zero-shot transfer to robot models and capability
variations withheld during training. Physical-robot experiments further
demonstrate zero-shot deployment to unseen kinematic models without
retraining.

\end{abstract} 

\section{Introduction}
\label{sec:intro}
\vspace{-6pt}

Multi-robot systems are increasingly expected to operate as evolving
fleets rather than fixed groups of identical platforms~\cite{rizk2019coop_hetero_mr}. Robots may be replaced, new platforms introduced after deployment, and
a single mission may involve robots with different mobility and
actuation capabilities. This heterogeneity is particularly important
for safety-critical navigation. Robots with different kinematic
structures can respond very differently to the same interaction: an
omnidirectional robot may move laterally to avoid a collision, whereas
a car-like robot is constrained by its heading and steering
angle~\cite{alonso2012reciprocal, alonso2013optimal}.
 Even at identical positions and velocities, their
avoidance capabilities may differ due to dynamics, control limits, and
current operating states. Safe coordination therefore requires
controllers that can operate across heterogeneous platforms and remain
effective when previously unseen robots join the team~\cite{howell2023general_hetero_mr, bohlinger2025one_policy}.

\begin{figure}[!t]
    \captionsetup{skip=1pt}
    \centering
    \includegraphics[width=0.48\textwidth]{Figures/fig_1/fig_1_draft_v7_opt.pdf}
    \caption{Qualitative trajectory comparison of zero-shot policy transfer to an unseen robot model in a $3.8\,\mathrm{m}\times3.8\,\mathrm{m}$ workspace, where five robots navigate from randomly sampled starts to goals while avoiding three static obstacles (yellow boxes). A forward-only (FO) robot (purple), with motion capabilities different from those seen during training, is introduced only at evaluation. \textbf{(a)} Our capability-conditioned policy maintains safe trajectories with the unseen FO robot, while \textbf{(b)} baselines relying on discrete robot identities result in robot--robot or robot--obstacle collisions.}
    \label{fig:qualitative} \vspace{-2em}
\end{figure}

Graph control barrier functions (GCBFs)~\cite{zhang2023gcbf,zhang2025gcbf+}
provide scalable decentralized safety by learning graph-structured barriers
and policies from local observations. However, existing GCBF formulations
typically assume common robot dynamics. Their graph representations capture
neighboring states and relative motion, but do not explicitly represent how
each neighbor's dynamics constrain its future motion. Recent heterogeneous
GCBF methods address differences in state and action spaces through
zero-padding for feature alignment and robot-type embeddings for model
identification~\cite{yang2026hgcbf}. While such representations can
distinguish robot types observed during training, they do not describe the
motion capability available at the current state or naturally represent an
unseen model~\cite{howell2023general_hetero_mr}. The central challenge
therefore remains: \textit{How can a single distributed graph barrier reason about
heterogeneous robot motion capabilities and transfer to unseen robot
models without retraining?}
{
To answer this question, we propose a \textit{Capability-Conditioned
Heterogeneous GCBF (CC-HGCBF)} framework for safe learning, where a
shared graph barrier and decentralized policy are conditioned on
robot-specific motion capabilities. 
}{This enables a single learned controller to coordinate mixed teams and
transfer to previously unseen supported models without retraining. Specifically, (1) we propose CC-HGCBF,
a novel heterogeneous GCBF formulation in a shared 
operational space, and establish its safety condition for mixed robot
teams. This formulation provides a common representation of relative motion and safety
while preserving model-specific responses and control constraints; (2) We introduce a state-dependent capability representation,
the Radial Capability Field (RCF), which encodes each robot's
finite-horizon reachable velocity set in a fixed-dimensional form.
Conditioning the shared barrier and policy on the RCF enables zero-shot
transfer to unseen supported models without predefined model identities
or retraining; and (3) We evaluate the scalability and zero-shot generalization of
CC-HGCBF across increasing team sizes and obstacle densities, including
robot models and capability parameters withheld during training.
Comparisons with capability-agnostic, type-conditioned, and
static-capability baselines highlight the benefit of state-dependent
capability conditioning, while physical-robot experiments validate its
zero-shot deployment. 

\section{Related Work} 
\label{sec:relWork}





Control barrier functions (CBFs) provide a principled framework for
enforcing safety through forward invariance of a prescribed safe
set~\cite{ames2016cbf_qp,ames2019cbf}. In multi-robot systems,
analytical CBF methods commonly construct pairwise safety constraints
and enforce them through online quadratic programs
(QPs)~\cite{wang2017mr_cbf}. However, the number of constraints grows
with team size and interaction density, increasing computational cost,
while input limits and simultaneous safety requirements can compromise
QP feasibility~\cite{tan2022compatibility}.


Learning-based graph CBFs improve scalability by aggregating local
interactions through shared graph neural networks
(GNNs)~\cite{qinlearning,dawson2023learn_certificates_survey}.
GCBF learns a graph-structured barrier and decentralized
policy~\cite{zhang2023gcbf}, while GCBF+ introduces neural CBF-QP
supervision and incorporates actuation limits~\cite{zhang2025gcbf+}.
HJB-GNN further combines graph CBF and value-function learning through a
constrained Hamilton--Jacobi--Bellman formulation~\cite{wang2025HJB-GNN}.
Although these methods generalize across team sizes and interaction
topologies, they are primarily formulated for robots with common
dynamics and do not explicitly represent the state-dependent motion
capabilities that determine how heterogeneous robots can respond.

Analytical CBF methods for heterogeneous teams explicitly account for
differences in robot dynamics and control limits. Multi-robot barrier
certificates use model-specific pairwise constraints
~\cite{wang2016hetero_mr_cbf}, Parametric-CBF methods characterize
heterogeneous safety through estimated barrier parameters
~\cite{lyu2022parametric}, and capability-aware CBFs allocate
collision-avoidance responsibility according to robot motion
capabilities~\cite{kim2026capability}. These approaches retain
model-specific dynamics, but generally rely on pairwise modeling,
explicit responsibility allocation, or online optimization tailored to
the participating robot models, motivating scalable approaches that can
accommodate heterogeneous motion capabilities.

Recent learning-based GCBF methods address heterogeneity through aligned
state and action representations. In particular, hGCBF uses zero-padding
and one-hot model-type embeddings to process multiple robot types with a
shared GNN~\cite{yang2026hgcbf}. Such type-conditioned representations
distinguish robot classes observed during training, but do not describe
how their feasible responses vary with kinematic constraints, actuation
limits, or current state, nor do they provide a natural representation
for capabilities unseen during training
~\cite{howell2023general_hetero_mr,fu2025cash}.
Taken together, these gaps motivate capability-conditioned graph barriers
that retain the scalability of GCBFs while representing heterogeneous
interactions in a shared operational space and conditioning decisions on
state-dependent motion capabilities, including for robot models unseen
during training.

\begin{figure*}[t]
    \centering

    \begin{minipage}[t]{0.70\textwidth}
        \vspace{0pt}
        \centering
        \includegraphics[
            width=\linewidth,
            trim={0 0 0 0},
            clip
        ]{Figures/overview/overview_draft_v10.pdf}
    \end{minipage}
    \hfill
    \begin{minipage}[t]{0.285\textwidth}
        \vspace{0pt}
        \caption{
    Overview of the proposed CC-HGCBF framework.
    \stepnum{1} Heterogeneous robot motion is mapped into a shared
    reference-point operational space.
    \stepnum{2} The RCF represents each robot's state-dependent
    finite-horizon motion capability in this common space.
    \stepnum{3} These capability features are embedded into the local
    interaction graph and incorporated into neighboring robots' safety reasoning
    through graph message passing.
    \stepnum{4} The shared barrier and decentralized policy are
    jointly trained using proactive safety labeling and neural CBF-QP supervision.
    \stepnum{5} During decentralized execution, each robot constructs
    its local capability-conditioned graph and applies the learned policy through
    its model-specific control interface.
    }
        \label{fig:overview}
    \end{minipage}

    \vspace{-1.5em}
\end{figure*}

\section{Problem Statement} \vspace{-3pt}
\label{sec:problem_statement}

{

We consider a team of $N$ mobile robots operating in a shared 2D
workspace, indexed by $\mathcal V=\{1,\ldots,N\}$. Each robot $i$ has a
model $c_i\in\mathcal C$, where $\mathcal C$ denotes a family of
heterogeneous mobile-robot models. Its dynamics are described by the
model-dependent control-affine system $\dot{\mathbf x}_i = \mathbf f_{c_i}(\mathbf x_i) + \mathbf g_{c_i}(\mathbf x_i)\mathbf u_i, \; \mathbf u_i\in\mathcal U_{c_i}(\mathbf x_i)$, 
where $\mathbf x_i$ and $\mathbf u_i$ denote the state and control input,
and $\mathcal U_{c_i}(\mathbf x_i)$ is the admissible control set.

We assume that each robot model admits a reference-point mapping
$\mathbf p_i=\Gamma_{c_i}(\mathbf x_i)$ such that its motion can be
expressed in the common second-order operational form \vspace{-3pt}
\begin{equation}
    \ddot{\mathbf p}_i
    =
    \boldsymbol{\eta}_{c_i}(\mathbf x_i)
    +
    \mathbf G_{c_i}(\mathbf x_i)\mathbf u_i.
    \label{eq:operational_dynamics}
\end{equation}
Here, $\mathbf p_i\in\mathbb R^2$ denotes the operational position, while $\boldsymbol{\eta}_{c_i}$ and $\mathbf G_{c_i}$ preserve the model-dependent motion response within the shared representation. The construction of this representation is introduced in Sec.~\ref{sec:method}.
}

{

The robots operate using only local observations. Robot $i$ interacts
with neighbors within sensing radius $R$, with neighborhood $\mathcal{N}_i = \left\{j \in \mathcal{V} \setminus \{i\}\;\middle|\; \lVert \mathbf{p}_i - \mathbf{p}_j \rVert \leq R\right\}$.
Static obstacles are perceived using a 2D LiDAR with
$N_{\mathrm{ray}}$ uniformly distributed rays over the same range. The
local observation $\mathcal O_i$ contains neighboring-robot information
and LiDAR measurements, without access to global information.

Each robot is assigned a goal $\mathbf p_i^g\in\mathbb R^2$ and an
operational safety radius $r_i>0$. Safe operation requires
$\lVert\mathbf p_i-\mathbf p_j\rVert>r_i+r_j$ for all $i\neq j$, with
an analogous separation constraint for static obstacles.
During training, the controller is exposed only to robot models in
$\mathcal C_{\mathrm{train}}\subset\mathcal C$. At deployment, models in
$\mathcal C\setminus\mathcal C_{\mathrm{train}}$ may be introduced
without retraining. \textbf{Our objective} is to learn a shared decentralized
policy
$\pi:\mathcal O_i\rightarrow\mathbf u_i$,
with $\mathbf u_i\in\mathcal U_{c_i}(\mathbf x_i)$, for safe
goal-directed navigation across varying team sizes and robot models,
including models unseen during training. \vspace{-6pt}

\vspace{-0.2em}
\section{Method}
\label{sec:method}


Designing a shared safety controller for heterogeneous robot teams requires
resolving two closely connected questions. First, how should different
robots be represented so that their interactions are directly comparable
while their distinct abilities to respond remain visible? Second, once
such a representation is available, how can it support scalable
decentralized safety learning that remains effective for robot models not
encountered during training? The following sections develop CC-HGCBF
around these two questions.



\subsection{Shared Operational-Space Representation}
\label{subsec:operational_space}

\subsubsection{Reference-Point Mapping}
We first specify the reference-point mapping assumed in
Sec.~\ref{sec:problem_statement}. Different robot models describe their
motion in model-specific coordinates, making relative motion difficult to
represent consistently across heterogeneous robots
~\cite{yang2026hgcbf,dawson2023learn_certificates_survey}. We therefore
represent each robot by a reference point whose position and velocity are
expressed in common Cartesian coordinates while retaining its
model-specific dynamics and control constraints
~\cite{lynch2017modern_robotics}.


{
For robot $i$, let $\bar{\mathbf p}_i=[p^{\mathrm b}_{i,x},p^{\mathrm b}_{i,y}]^\top$ denote its body-center position.
We define the model-dependent reference-point mapping
$\Gamma_{c_i}:\mathcal X_{c_i}\rightarrow\mathbb R^2$ as \vspace{-6pt}
\begin{equation}
\mathbf p_i
=
\Gamma_{c_i}(\mathbf x_i)
:=
\bar{\mathbf p}_i
+
x_{r,c_i}
\begin{bmatrix}
\cos\phi_i\\
\sin\phi_i
\end{bmatrix},
\label{eq:reference_point}
\end{equation}
where $\phi_i$ is the robot heading and $x_{r,c_i}$ is the
model-dependent reference-point offset. For holonomic models,
$x_{r,c_i}=0$, so the reference point coincides with the body center;
for nonholonomic models, a nonzero look-ahead distance ensures a
nonsingular input transformation.
}

At this reference point, longitudinal and angular motion induce a
two-dimensional Cartesian velocity, providing a common representation
across robot models. Differentiating~\eqref{eq:reference_point} twice
yields the second-order operational dynamics in
\eqref{eq:operational_dynamics}, where
$\boldsymbol{\eta}_{c_i}(\mathbf x_i)$ is the state-dependent drift and
$\mathbf G_{c_i}(\mathbf x_i)$ maps the model-specific control to
reference-point acceleration. Each robot is represented by the
common operational state
$\mathbf s_i=[\mathbf p_i^\top,\dot{\mathbf p}_i^\top]^\top$.
Consequently, $\mathbf p_i-\mathbf p_j$ and
$\dot{\mathbf p}_i-\dot{\mathbf p}_j$ have consistent physical meanings
across robot models and can be used as shared interaction features.
Model-specific responses are retained through
$\boldsymbol{\eta}_{c_i}$, $\mathbf G_{c_i}$, and
$\mathcal U_{c_i}(\mathbf x_i)$.



{
\subsubsection{Operational-Space Safety}
\label{subsubsec:op_safety}

Because the reference point may be offset from the body center, we define
the operational safety radius as $r_i:=r_i^{\mathrm{phys}}+x_{r,c_i}$, where
$r_i^{\mathrm{phys}}$ bounds the robot's physical extent from its body
center.
Since $\|\mathbf p_i-\bar{\mathbf p}_i\|=x_{r,c_i}$, the triangle inequality
implies $\|\bar{\mathbf p}_i-\bar{\mathbf p}_j\|
\geq
\|\mathbf p_i-\mathbf p_j\|-x_{r,c_i}-x_{r,c_j}.$
Thus, $\|\mathbf p_i-\mathbf p_j\|\geq r_i+r_j$ guarantees
$\|\bar{\mathbf p}_i-\bar{\mathbf p}_j\|
\geq r_i^{\mathrm{phys}}+r_j^{\mathrm{phys}}$, which is sufficient to
prevent overlap of their physical bounding regions. 
\vspace{-3pt}

}




\subsection{Radial Capability Field}
\label{subsec:rcf}

The shared operational representation makes heterogeneous robot motion
directly comparable, but does not describe how each robot can respond to
an interaction. Two robots may have the same operational velocity while
having very different abilities to change it: one may accelerate
laterally, whereas another may change direction only through forward
motion. These differences are critical for collision avoidance and depend
on the robot model, current state, and admissible controls.
We therefore introduce the RCF, a fixed-dimensional representation of each robot's state-dependent
finite-horizon motion capability in the shared operational-velocity
space. RCF characterizes the directional velocity changes reachable from
the current state by constructing a local reachable velocity set, mapping
it into the shared space, and encoding its directional extent.



\subsubsection{Finite-Horizon Reachable Velocity Approximation}

We first distinguish between the velocities permitted by a robot model and those that are locally reachable from the robot's current state.
For a nonholonomic robot, let $\mathbf z_i=[v_i,\omega_i]^\top$ denote its body-velocity state, where
$v_i$ and $\omega_i$ are the longitudinal and angular velocities,
respectively. For a holonomic robot, $\mathbf z_i$ denotes its
two-dimensional Cartesian velocity.


Let $\mathcal V_{c_i}$ denote the feasible velocity set determined by the
kinematic and velocity constraints of model $c_i$. Since
$\mathcal V_{c_i}$ describes all permitted velocities but not those
reachable from the current state within a short response time, we
approximate the finite-horizon reachable set as \vspace{-6pt}
%
\begin{equation}
\mathcal W_i(\mathbf x_i,T_p)
=
\left\{
\mathbf z_i + T_p\mathbf u
\mid
\mathbf u\in\mathcal U_{c_i}(\mathbf x_i)
\right\}
\cap
\mathcal V_{c_i}.
\label{eq:reachable_velocity}
\end{equation}
Equation~\eqref{eq:reachable_velocity} assumes constant control over
$T_p$ and retains only velocities satisfying the model-specific feasible
velocity constraints. Thus, $\mathcal V_{c_i}$ describes what the robot
model permits, whereas $\mathcal W_i$ describes what the robot can reach
from its current state over the response horizon.

Because $\mathcal W_i$ remains expressed in model-specific velocity
coordinates, we map it into the shared operational-velocity space. Let
$\Phi_{c_i}$ denote the robot-aligned velocity mapping. For
nonholonomic robots,
\begin{equation}
\Phi_{c_i}
\left(
\begin{bmatrix}
v\\
\omega
\end{bmatrix}
\right)
=
\begin{bmatrix}
v\\
x_{r,c_i}\omega
\end{bmatrix},
\label{eq:velocity_mapping}
\end{equation}
%
while $\Phi_{c_i}$ is the identity for holonomic Cartesian velocity.
The current operational velocity is
$\mathbf q_i=\Phi_{c_i}(\mathbf z_i)$, and the corresponding reachable
set is
$\mathcal S_i(T_p)
=
\Phi_{c_i}\!\left(\mathcal W_i(\mathbf x_i,T_p)\right).$
Thus, $\mathcal S_i(T_p)\subset\mathbb R^2$ has common physical velocity
coordinates across robot models while retaining the effects of current
state, kinematic constraints, and admissible controls.

\subsubsection{State-Centered Radial Encoding}

The reachable sets $\mathcal S_i(T_p)$ share a common space but can have
different geometries across robot models and states. To obtain a
fixed-dimensional representation, we characterize each set by its
directional extent from the current operational velocity $\mathbf q_i$,
thereby encoding available velocity changes rather than absolute feasible
velocities.


Let
$\mathbf d(\theta)=[\cos\theta,\sin\theta]^\top$,
$\theta\in[0,2\pi)$, denote a unit direction. We define
\begin{equation}
\rho_i(\theta)
=
\max
\left\{
r\geq0
\;\middle|\;
\mathbf q_i+\tau\mathbf d(\theta)\in\mathcal S_i(T_p),
\ \forall\tau\in[0,r]
\right\}.
\label{eq:radial_function}
\end{equation}
Here, $\rho_i(\theta)$ measures the maximum velocity change available
from the current state in direction $\theta$. Large values indicate
greater response capability, while small or zero values indicate limited
motion. Directional asymmetries capture effects such as unequal forward
and backward mobility, and the profile varies with state and admissible
controls.


We evaluate $\rho_i$ at $N_\theta$ uniformly spaced directions,
$\theta_m=2\pi m/N_\theta$, and encode the resulting directional profile
with a truncated Fourier series of order $K$, \vspace{-6pt}
\begin{equation}
\rho_i(\theta)
\approx
a_{i,0}
+
\sum_{k=1}^{K}
\left(
a_{i,k}\cos(k\theta)
+
b_{i,k}\sin(k\theta)
\right),
\label{eq:fourier_rcf}
\end{equation}
where $a_{i,k}$ and $b_{i,k}$ are the corresponding Fourier coefficients. We collect the retained coefficients as
\begin{equation}
\mathbf f_i
=
[a_{i,0},a_{i,1},b_{i,1},\ldots,a_{i,K},b_{i,K}]^\top
\in\mathbb R^{2K+1}.
\label{eq:fourier_feature}
\end{equation}

The RCF feature is then defined as
\begin{equation}
\boldsymbol\psi_i^{\mathrm{RCF}}
=
\left[
\frac{x_{r,c_i}}
{\max_{c'}x_{r,c'}},
\mathbf f_i^\top
\right]^\top
\in\mathbb R^{2K+2}.
\label{eq:rcf_feature}
\end{equation}
We choose $2K+1\leq N_\theta$ with $N_\theta$ sufficiently large to
resolve the directional profile. The RCF construction and example
state-dependent representations are shown in
Fig.~\ref{fig:overview}\stepnum{2} and ~\ref{fig:rcf_velocity_sets},
respectively.
Unlike type embeddings or static feasible-velocity representation, RCF
captures what a robot can do from
its current state over a finite
response horizon.  For any model admitting the shared operational
representation, RCF can be constructed independently of the classes used
during training, allowing unseen models to use the same graph input
without new type labels or architectural changes.
\subsection{Capability-Conditioned Heterogeneous GCBF}
\label{subsec:cchgcbf}

The shared operational space provides a common representation of relative
motion, while the RCF describes each robot's state-dependent response
capability. We combine these representations in a capability-conditioned
interaction graph, where edges encode relative operational states and
robot nodes encode motion capabilities. This allows a shared graph barrier
and policy to reason over common interaction geometry while accounting for
the capabilities of the participating robots, as illustrated in
Fig.~\ref{fig:overview}\stepnum{3}.



\subsubsection{Capability-Conditioned Interaction Graph}

At each control step, local interactions are represented by a directed
graph $\mathcal G=(\mathcal V_{\mathcal G},\mathcal E)$ containing robot,
goal, and locally observed obstacle nodes. A robot--robot edge
$j\rightarrow i$ is created when robot $j$ lies within sensing radius $R$,
with
$\mathcal E_{\mathrm{rr}}
=
\{(j,i)\mid j\neq i,\,
\|\mathbf p_j-\mathbf p_i\|_2\le R\}$.
Each robot is also connected to its goal, while active LiDAR returns are
represented as static obstacle nodes. The graph is reconstructed as the
local interaction topology changes.

For robot $i$, the node feature is
{\small
\begin{equation}
\mathbf v_i
=
\left[
(\mathbf e_i^{\mathrm{node}})^\top,\,
(\boldsymbol{\psi}_i^{\mathrm{RCF}})^\top,\,
\dot{\mathbf p}_i^\top,\,
\boldsymbol{\eta}_{c_i}(\mathbf x_i)^\top,\,
\sin\phi_i,\,
\cos\phi_i
\right]^\top,
\label{eq:agent_node_feature}
\end{equation}
}
where $\mathbf e_i^{\mathrm{node}}$ indicates the graph entity type,
$\boldsymbol{\psi}_i^{\mathrm{RCF}}$ is the RCF feature, and
$\boldsymbol{\eta}_{c_i}$ is the state-dependent acceleration term of
the operational dynamics. The node-type indicator distinguishes robots,
goals, and obstacles without encoding the robot model; robot-specific
entries are set to zero for goal and obstacle nodes.

Each directed edge carries the relative operational state \vspace{-4pt}
\begin{equation}
\mathbf e_{ij}
=
\left[
(\mathbf p_j-\mathbf p_i)^\top,\,
(\dot{\mathbf p}_j-\dot{\mathbf p}_i)^\top
\right]^\top .
\label{eq:edge_feature}
\end{equation}
Goal and obstacle edges are defined analogously, with zero velocity for
static nodes. Thus, edge features describe \emph{what interaction is
occurring}, while the RCF describes \emph{how each robot can respond}.
This separation provides a common graph representation without requiring
discrete robot identities.

We use the attention-based message-passing architecture of
GCBF+~\cite{zhang2025gcbf+} for the graph barrier and decentralized
policy, with independent network parameters shared across robots and
graph sizes. Message passing incorporates neighboring RCF features into
the local interaction representation, allowing both networks to account
for neighboring robots' current motion capabilities.


\begin{figure*}[!t]
    \centering
    \begin{subfigure}[b]{0.25\textwidth}
        \centering
        \includegraphics[width=0.8\linewidth]{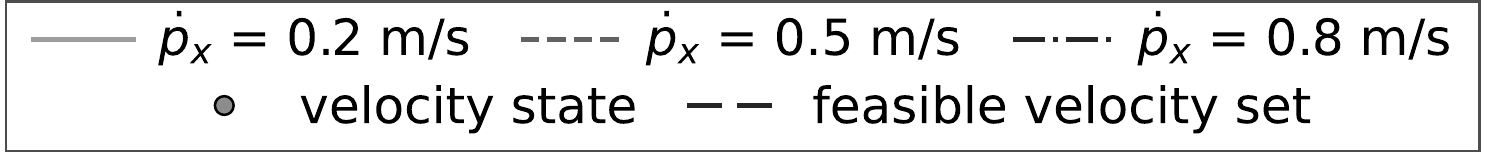}
        \par\vspace{0.3em}
        \hspace*{-4mm}%
        \includegraphics[width=0.8\linewidth]{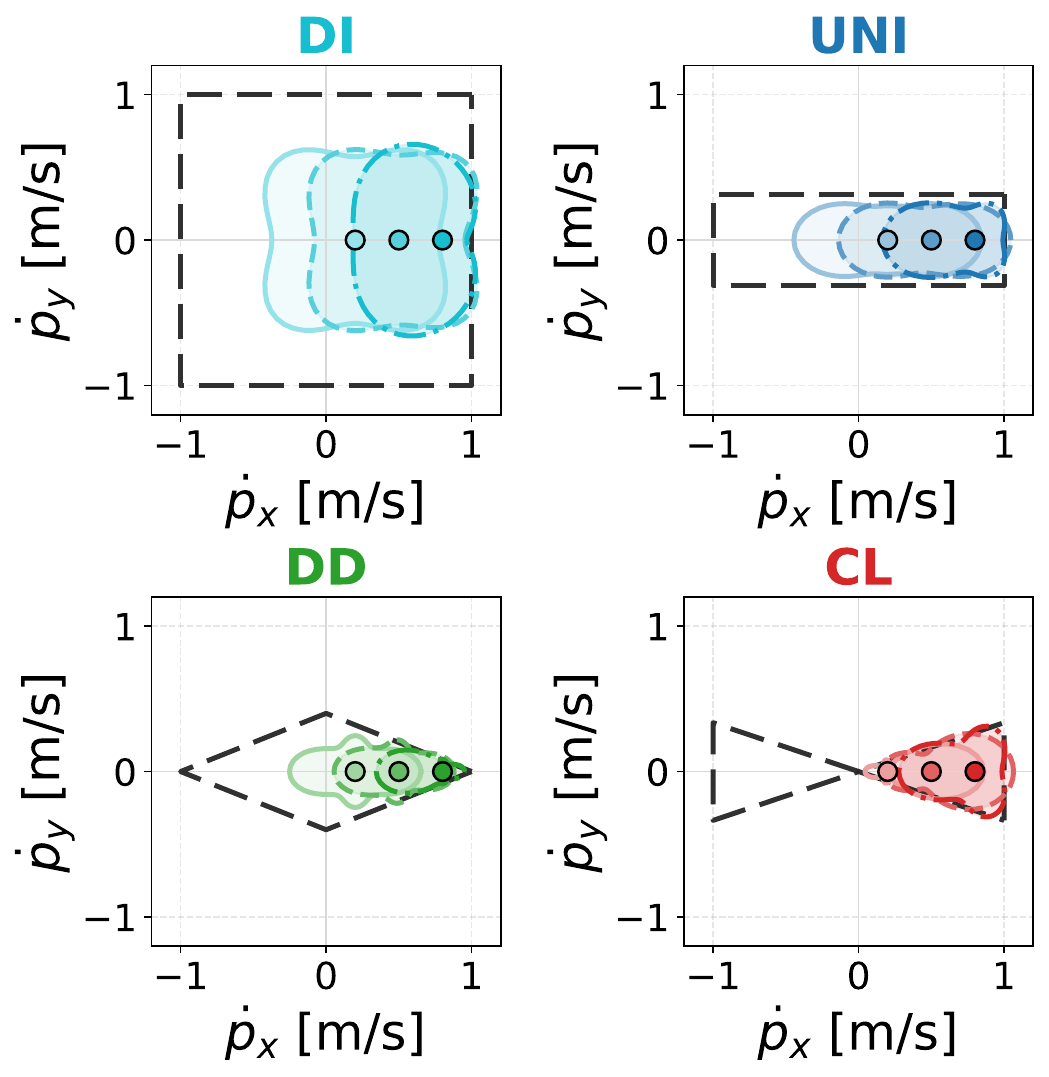}
        \caption{State-dependent RCF examples.}
        \label{fig:rcf_velocity_sets}
    \end{subfigure}%
    \hfill
    \begin{subfigure}[b]{0.75\textwidth}
        \centering
        \includegraphics[width=\linewidth]{Figures/training_v2.pdf}
        \par\vspace{0.5em}
        \caption{Training pipeline.}
        \label{fig:training}
    \end{subfigure}

    \caption{
    Capability representation and training procedure.
    \textbf{(a)} State-dependent RCF representations for DI, UNI, DD,
    and CL in the shared operational-velocity
    space~\cite{lynch2017modern_robotics}. Colored regions show the {$K=8$}
    RCFs at $v_i\in\{0.2,0.5,0.8\}$\,m/s with $\omega_i=0$.
    Circular markers denote the current operational velocities, and
    black dashed boundaries denote the model-specific feasible velocity
    sets mapped into the shared space.
    \textbf{(b)} Joint barrier--policy training
    pipeline~\cite{zhang2025gcbf+} with proactive safety labeling.
    The data collection example visualizes the learned barrier
    $h_\theta$ around an ego robot as a local safety map, with lower
    barrier values near detected obstacles and along the motion
    directions of neighboring robots. Policy rollouts generate
    interaction data, proactive safety labels supervise the graph
    barrier, and the neural CBF-QP provides supervisory actions for
    policy learning.
    }
    \label{fig:method_overview}
    \vspace{-1.5em}
\end{figure*}

\subsubsection{CC-HGCBF Formulation and Safety Condition}

For robot $i$, let $\mathcal G_i$ denote its local interaction graph and
$h_{\theta,i}:=h_\theta(\mathcal G_i)$ the corresponding barrier value.
The induced graph safe set is \vspace{-3pt}
\begin{equation}
\mathcal C_\theta
=
\left\{
\mathcal G
\;\middle|\;
h_{\theta,i}\ge0,\;
\forall i\in\mathcal V
\right\}.
\label{eq:learned_safe_set}
\end{equation}
Let $\mathcal U(\mathbf x):=\prod_{j\in\mathcal V}
\mathcal U_{c_j}(\mathbf x_j)$ denote the joint admissible control set,
$\alpha$ a locally Lipschitz extended class-$\mathcal K$
function, and
$\dot h_{\theta,i}:=\frac{d}{dt}h_\theta(\mathcal G_i(t))$ the
exact barrier derivative along a closed-loop trajectory.
Although the barrier parameters are shared across robot models, the
constraint is evaluated in the common operational space using
the model-specific dynamics and admissible control set of each robot
influencing the local graph. We refer to this shared
capability-conditioned barrier construction as the
\textbf{Capability-Conditioned Heterogeneous Graph Control Barrier
Function (CC-HGCBF)}.  The following CBF condition characterizes continuous-time feedbacks that preserve the learned safe set.

\begin{theorem}[Conditional Continuous-Time Invariance]
\label{thm:cchgcbf_invariance}
Let $\mathcal C_{\mathrm{op}}$ denote the operational collision-free
set defined by the separation constraints in Sec.~\ref{subsubsec:op_safety}. Suppose that $\mathcal C_\theta\subseteq\mathcal C_{\mathrm{op}}$ and that
$\mathcal G(0)\in\mathcal C_\theta$. Assume that
$h_{\theta,i}(\mathcal G_i(t))$ is continuously differentiable along
the closed-loop trajectory. If an admissible continuous-time feedback satisfies
$\mathbf u(t)\in\mathcal U(\mathbf x(t))$ and
$\dot h_{\theta,i}+\alpha(h_{\theta,i})\ge0$ for all
$i\in\mathcal V$ and $t\ge0$,
then the resulting trajectory satisfies
$\mathcal G(t)\in\mathcal C_\theta$ for all $t\ge0$.
Consequently, the operational separation constraints remain satisfied,
and the physical robot bodies remain collision-free.
\end{theorem}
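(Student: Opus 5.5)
The argument reduces to a scalar comparison lemma applied to each robot separately, followed by the set inclusions $\mathcal C_\theta\subseteq\mathcal C_{\mathrm{op}}$ and the triangle-inequality bound of Sec.~\ref{subsubsec:op_safety}.

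\textbf{Per-robot comparison.} Fix $i\in\mathcal V$ and set $y_i(t):=h_{\theta,i}(\mathcal G_i(t))$. By the differentiability assumption, $y_i$ is $C^1$ along the closed-loop trajectory. It satisfies $y_i(0)\ge 0$, since $\mathcal G(0)\in\mathcal C_\theta$, and $\dot y_i(t)\ge -\alpha(y_i(t))$ for all $t\ge0$. Let $w$ be the solution of $\dot w=-\alpha(w)$ with $w(0)=y_i(0)$. Because $\alpha$ is locally Lipschitz, this solution exists and is unique. Since $\alpha(0)=0$, the zero function is an equilibrium, so uniqueness gives $w(t)\ge0$ for all $t$. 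The standard comparison lemma (Khalil) then yields $y_i(t)\ge w(t)\ge 0$.

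A direct argument avoids invoking the lemma. Suppose $y_i(t_1)<0$ for some $t_1$, and let $t_0=\sup\{t<t_1: y_i(t)\ge0\}$. On $(t_0,t_1]$ we have $y_i<0$, so $\alpha(y_i)<0$ and hence $\dot y_i\ge-\alpha(y_i)>0$. Then $y_i$ is increasing on that interval. Since $y_i(t_0)=0$, this forces $y_i(t_1)>0$, a contradiction.

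\textbf{Collecting over robots.} The conclusion $y_i(t)\ge 0$ holds for every $i$ and every $t$, so $\mathcal G(t)\in\mathcal C_\theta$ for all $t\ge0$ by \eqref{eq:learned_safe_set}.

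\textbf{Physical safety.} The inclusion $\mathcal C_\theta\subseteq\mathcal C_{\mathrm{op}}$ gives $\|\mathbf p_i-\mathbf p_j\|\ge r_i+r_j$ for all $i\ne j$, and likewise for obstacles. The triangle-inequality argument of Sec.~\ref{subsubsec:op_safety}, using $\|\mathbf p_i-\bar{\mathbf p}_i\|=x_{r,c_i}$, then gives $\|\bar{\mathbf p}_i-\bar{\mathbf p}_j\|\ge r_i^{\mathrm{phys}}+r_j^{\mathrm{phys}}$, so the physical bodies remain collision-free.

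\textbf{Main obstacle.} The delicate point is that the local graph $\mathcal G_i(t)$ changes topology as neighbors enter or leave the sensing radius or LiDAR returns change. Without care, $y_i$ could jump. The theorem sidesteps this by assuming $C^1$ regularity of $h_{\theta,i}$ along the trajectory, and I would state explicitly that the argument relies on this assumption.

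Two further points need care. First, the comparison step requires only continuity and the differential inequality; it does not require the admissibility condition $\mathbf u\in\mathcal U$. Admissibility serves only to guarantee that the closed-loop trajectory exists as a solution of the model dynamics \eqref{eq:operational_dynamics}, and I would assume this trajectory exists for all $t\ge0$. Second, the case of equality, $\dot y_i=0$ at $y_i=0$, is already covered by the sup argument, because on $(t_0,t_1]$ we have strict negativity $y_i<0$ and therefore a strict inequality for $\dot y_i$.
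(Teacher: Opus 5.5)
Your proposal is correct and follows the same route as the paper. Both show per-robot forward invariance of $h_{\theta,i}\ge0$ from the CBF inequality, intersect over $i\in\mathcal V$ to get $\mathcal G(t)\in\mathcal C_\theta$, and then use $\mathcal C_\theta\subseteq\mathcal C_{\mathrm{op}}$ with the triangle-inequality bound of Sec.~\ref{subsubsec:op_safety}; the only difference is that you prove the scalar invariance step explicitly (your first-exit argument needs only $\alpha(s)<0$ for $s<0$), whereas the paper cites the standard CBF forward-invariance argument.
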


\begin{proof}
By assumption, each barrier value satisfies
the standard CBF condition along the trajectory. Since
$h_{\theta,i}(\mathcal G_i(0))\ge0$ for all $i$, the standard CBF
forward-invariance argument~\cite{ames2019cbf} yields
$h_{\theta,i}(\mathcal G_i(t))\ge0$ for all $i$ and $t\ge0$.
Hence, $\mathcal G(t)\in\mathcal C_\theta$ for all $t\ge0$.
Because $\mathcal C_\theta\subseteq\mathcal C_{\mathrm{op}}$, the
operational separation constraints and, by
Sec.~\ref{subsubsec:op_safety}, physical-body separation are preserved.
\end{proof}
During training, the RCF and other state-dependent features condition
the barrier evaluation and are updated as the system evolves. At each
control update, the supervisory CBF-QP uses the CC-HGCBF
condition to generate targets for the decentralized policy
in Sec.~\ref{subsec:learning}.

\subsection{Learning the Shared Barrier and Decentralized Policy}
\label{subsec:learning}

We follow the joint barrier--policy training framework of
GCBF+~\cite{zhang2025gcbf+} under a centralized-training,
decentralized-execution (CTDE) paradigm. Policy rollouts generate
capability-conditioned interaction data, proactive safety labels supervise
the shared barrier, and a neural CBF-QP provides supervisory actions for
the decentralized policy, as illustrated in Fig.~\ref{fig:training}.

\subsubsection{Data Generation and Proactive Safety Labeling}
\label{subsec:safety_labeling}
Training data are generated on-policy from heterogeneous robot teams.
At each control step, local graphs are reconstructed as described in
Sec.~\ref{subsec:cchgcbf} from current robot and obstacle observations.

Direct geometric collision labels depend only on the current separation
and therefore identify unsafe interactions only when the safety boundary
is reached or violated. However, pairs at the same separation can have
very different safety implications depending on their relative motion:
one may be moving apart while another is rapidly closing toward
collision~\cite{fiorini1998vo}. We therefore construct proactive safety
labels using the first control-independent HOCBF layer~\cite{xiao2019hocbf}. In the shared operational space, this criterion
captures both relative position and velocity, identifying closing
interactions before geometric overlap. Because it is independent of the
control input, the same labeling criterion applies across heterogeneous
robot models.

For a robot pair $(i,j)$, define
$\Delta\mathbf p_{ij}=\mathbf p_i-\mathbf p_j$,
$\Delta\dot{\mathbf p}_{ij}=\dot{\mathbf p}_i-\dot{\mathbf p}_j$, and
$R_{ij}=r_i+r_j$. The pairwise geometric safety function is \vspace{-3pt}
\begin{equation}
\psi_{0,ij}
=
\|\Delta\mathbf p_{ij}\|_2^2-R_{ij}^2 .
\label{eq:cbf_psi0}
\end{equation}
Since the operational dynamics are second order,
$\dot\psi_{0,ij}$ is control independent. We define the first HOCBF layer
as
\begin{equation}
\psi_{1,ij}
=
2\Delta\mathbf p_{ij}^{\top}
\Delta\dot{\mathbf p}_{ij}
+
\lambda\psi_{0,ij},
\label{eq:cbf_psi1}
\end{equation}
where $\lambda>0$ is shared across robot models.
A negative value $\psi_{1,ij}<0$ indicates that the pair is approaching
the safety boundary too rapidly relative to its current separation.
We use this as a proactive unsafe training signal rather than
as a standalone safety certificate.

The same construction is applied to static obstacles. For an active
LiDAR return $\mathbf q_{ik}$ observed by robot $i$,
$\psi_{0,ik}=\|\mathbf p_i-\mathbf q_{ik}\|_2^2-r_i^2$ and
$\psi_{1,ik}
=
2(\mathbf p_i-\mathbf q_{ik})^\top\dot{\mathbf p}_i
+\lambda\psi_{0,ik}$.
The instantaneous unsafe label is then
{\small
\begin{equation}
\begin{aligned}
y_i^{\mathrm{unsafe}}=1
\ \Longleftrightarrow \ &
\exists j:
\bigl(
\psi_{0,ij}<0
\;\vee\;
\psi_{1,ij}<0
\bigr)
\\
&\vee\
\exists k:
\bigl(
\psi_{0,ik}<0
\;\vee\;
\psi_{1,ik}<0
\bigr).
\end{aligned}
\label{eq:unsafe_label}
\end{equation}}
Thus, robot--robot and robot--obstacle interactions use the same
position--velocity criterion in the shared operational space. 
Safe samples are constructed by aggregating the instantaneous labels
over a backward horizon $H$, following the temporal labeling convention
in~\cite{zhang2025gcbf+, wang2025HJB-GNN}; samples assigned to neither set receive no
barrier-sign supervision.



\subsubsection{Joint Barrier--Policy Training}
\label{subsec:joint_training}

We jointly train the CC-HGCBF $h_\theta$ and decentralized policy $\pi_\phi$ on the same
capability-conditioned graph. The networks have independent parameters,
and the policy predicts a residual correction to the nominal control (see Fig.~\ref{fig:training}).
During training, a neural CBF-QP provides supervisory actions for policy
learning.

For the joint control vector
$\mathbf u=[\mathbf u_1^\top,\ldots,\mathbf u_N^\top]^\top$,
the supervisory action is obtained by
{\begin{equation}
\begin{aligned}
\mathbf u^{\mathrm{qp}}
=
\arg\min_{\mathbf u}\quad&
\sum_{i\in\mathcal V}
\|\mathbf u_i-\mathbf u_i^{\mathrm{ref}}\|_2^2
\\
\mathrm{s.t.}\quad&
\mathbf u_i\in\mathcal U_{c_i}(\mathbf x_i),
\qquad \forall i,
\\
&
\dot h_{\theta,i}(\mathcal G,\mathbf u)
+
\alpha(h_{\theta,i})
\ge0,
\quad \forall i .
\end{aligned}
\label{eq:training_qp}
\end{equation}
}
where $\mathbf u_i^{\mathrm{ref}}$ is the nominal control.
The barrier derivative is evaluated in the shared operational space
under each robot's model-specific dynamics, so the CBF constraints
retain the corresponding
$\boldsymbol{\eta}_{c_i}(\mathbf x_i)$,
$\mathbf G_{c_i}(\mathbf x_i)$, and
$\mathcal U_{c_i}(\mathbf x_i)$.

The barrier loss is
$\mathcal L_{\mathrm{CBF}}
=
\mathcal L_{\mathrm{unsafe}}
+
\mathcal L_{\mathrm{safe}}
+
\lambda_{\dot h}\mathcal L_{\mathrm{der}}$.
The first two terms use margin penalties
$[h_\theta+\epsilon]_+$ and
$[-h_\theta+\epsilon]_+$ to encourage negative and positive barrier
values for proactively labeled unsafe and safe samples, respectively.
The derivative loss penalizes CBF violations using the one-step estimate
$\widehat{\dot h}_{\theta,i}
=
\big(h_\theta(\mathcal G_i^+)-h_\theta(\mathcal G_i)\big)/\Delta t$.
The successor graph $\mathcal G_i^+$ is reconstructed after propagating
the model-specific dynamics under the current policy, updating the
operational state and state-dependent features, including the RCF.
The policy is supervised by the CBF-QP action through
$\mathcal L_{\mathrm{ctrl}}
=
|\mathcal B|^{-1}
\sum_{i\in\mathcal B}
\|\mathbf u_i^\pi-\mathbf u_i^{\mathrm{qp}}\|_2^2$,
where $\mathcal B$ denotes the training minibatch.
The overall objective is
$\mathcal L
=
\mathcal L_{\mathrm{CBF}}
+
\lambda_a\mathcal L_{\mathrm{ctrl}}$.

The CBF-QP is used only during training. At deployment, robot $i$
constructs $\mathcal G_i$ from local observations and applies
$\pi_\phi(\mathcal G_i)$ through its model-specific control interface,
without online QP optimization or global team information.

\begin{figure*}[t]
    \centering
    \captionsetup[subfigure]{skip=0pt}

    \begin{subfigure}[t]{0.32\textwidth}
        \vspace{0pt}
        \centering

        \begin{minipage}[t][3.25cm][t]{\linewidth}
            \vspace{0pt}
            \centering

            \includegraphics[
                width=0.98\linewidth,
                height=0.36cm,
                keepaspectratio
            ]{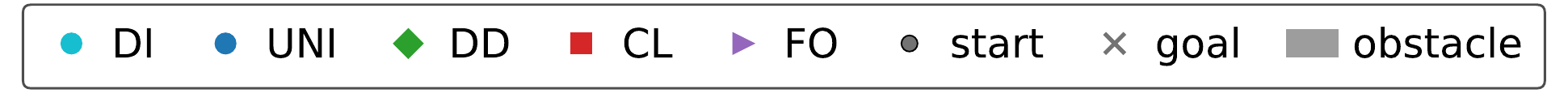}

            \vspace{0.15em}

            \includegraphics[
                width=\linewidth,
                height=3.45cm,
                keepaspectratio
            ]{Figures/test_rollout_ex.pdf}
        \end{minipage}

        \caption{Evaluation settings}
        \label{fig:test_env}
    \end{subfigure}
    \hfill
    \begin{subfigure}[t]{0.32\textwidth}
        \vspace{0pt}
        \centering

        \begin{minipage}[t][3.25cm][t]{\linewidth}
            \vspace{0pt}
            \centering

            \includegraphics[
                width=0.92\linewidth,
                height=0.36cm,
                keepaspectratio
            ]{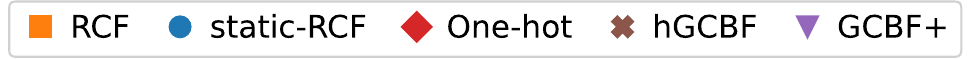}

            \vspace{0.15em}

            \includegraphics[
                width=\linewidth,
                height=1.62cm,
                keepaspectratio
            ]{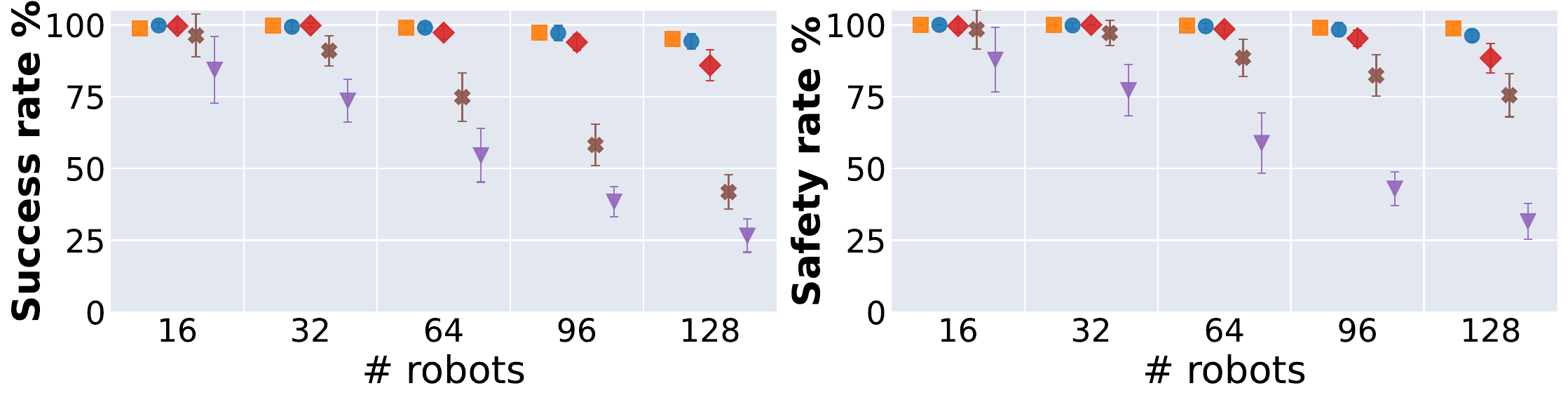}

            \vspace{0.05em}

            \includegraphics[
                width=\linewidth,
                height=1.62cm,
                keepaspectratio
            ]{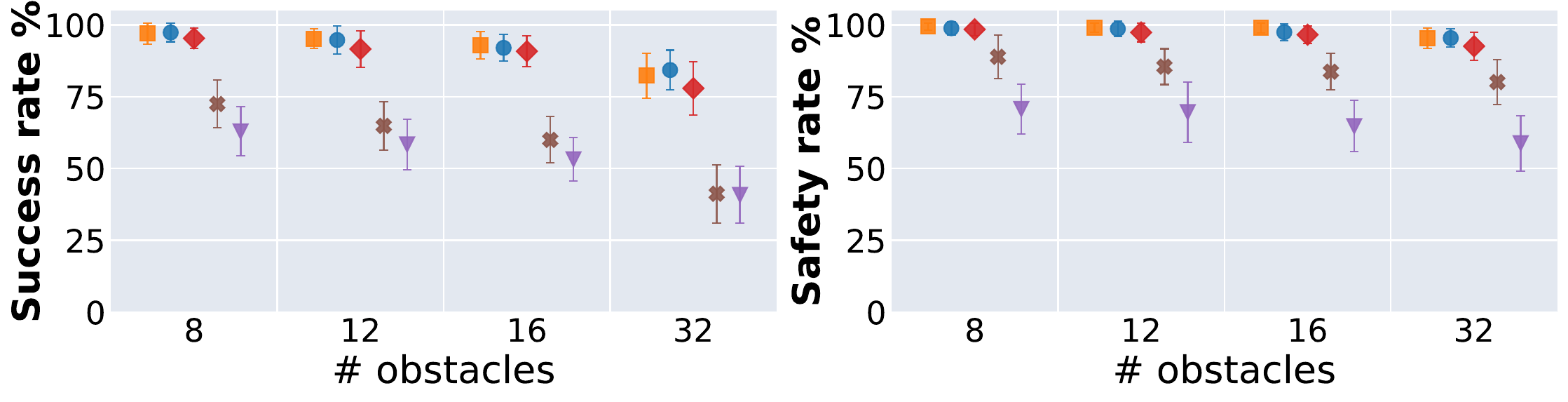}
        \end{minipage}

        \caption{Scalability with seen models}
        \label{fig:main_experiments}
    \end{subfigure}
    \hfill
    \begin{subfigure}[t]{0.32\textwidth}
        \vspace{0pt}
        \centering

        \begin{minipage}[t][3.25cm][t]{\linewidth}
            \vspace{0pt}
            \centering

            \includegraphics[
                width=0.92\linewidth,
                height=0.36cm,
                keepaspectratio
            ]{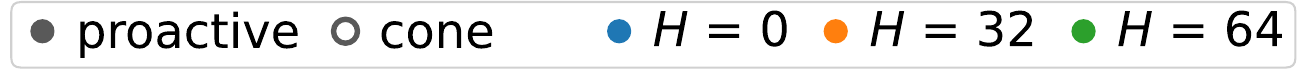}

            \vspace{0.15em}

            \includegraphics[
                width=\linewidth,
                height=1.62cm,
                keepaspectratio
            ]{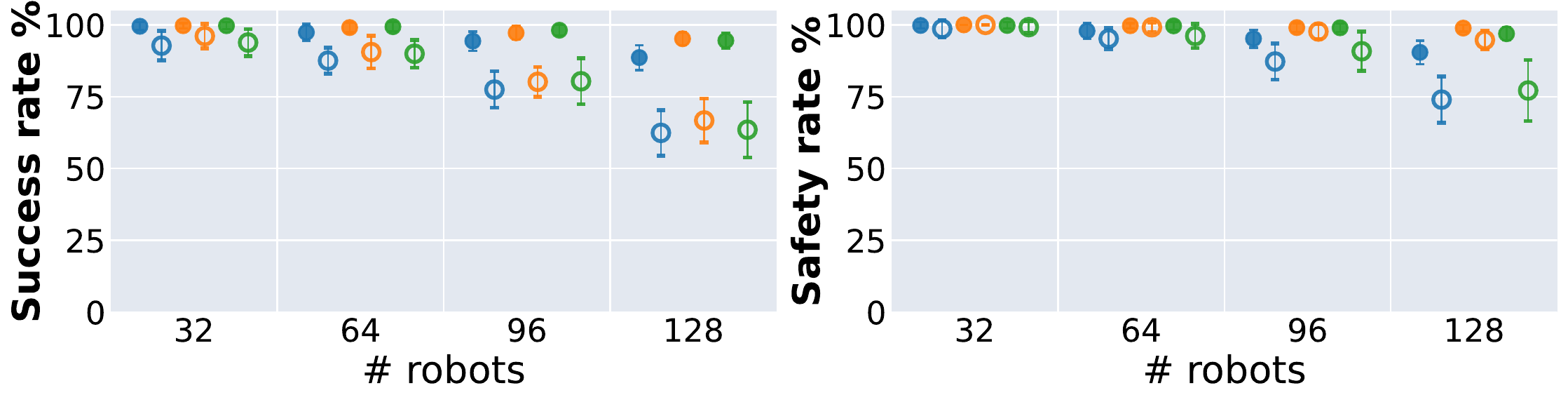}

            \vspace{0.05em}

            \includegraphics[
                width=\linewidth,
                height=1.62cm,
                keepaspectratio
            ]{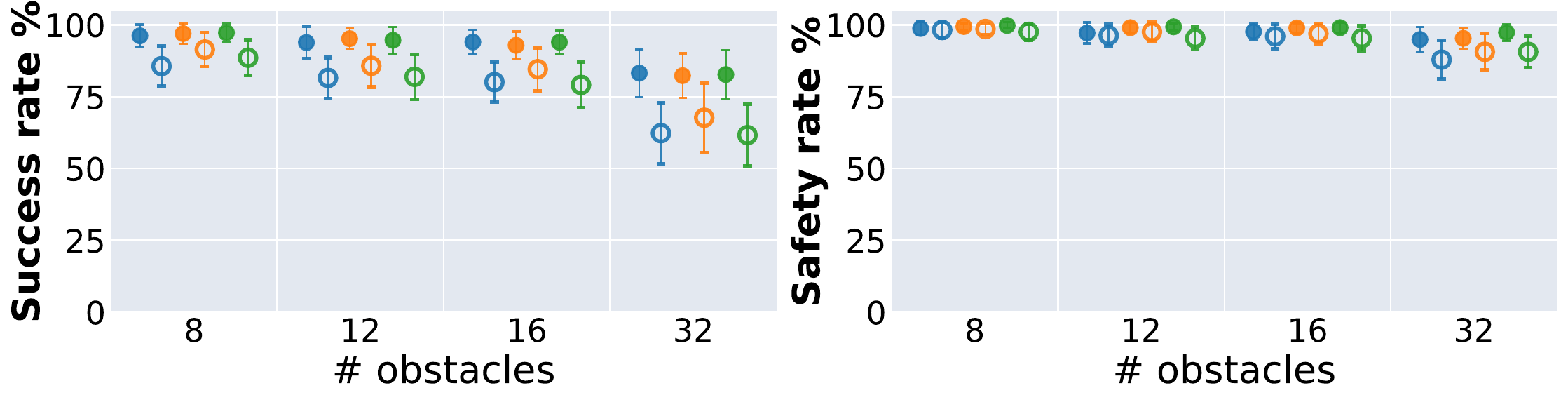}
        \end{minipage}

        \caption{Proactive-labeling ablation}
        \label{fig:horizon_sweep}
    \end{subfigure}

    \captionsetup{skip=0.3em}
    \caption{
    Experimental settings and results.
    (a) Random start--goal evaluation settings in a
    $22\,\mathrm{m}\times22\,\mathrm{m}$ workspace with $N=128$
    robots ($32$ per model) in \texttt{Free}, and $N=32$ robots
    ($8$ per model) with $N_{\mathrm{obs}}=32$ static obstacles in
    \texttt{Obs}. Colored markers denote DI, UNI, DD, and CL robots.
    (b) Scalable heterogeneous navigation with seen robot models.
    In \texttt{Free}, performance is evaluated as team size increases;
    in \texttt{Obs}, performance is evaluated as obstacle density
    increases with $N=32$.
    (c) Effect of proactive safety labeling and safe-label horizon $H$
    compared with velocity-cone-based labeling. Results are shown under
    increasing team size in \texttt{Free} and increasing obstacle density
    in \texttt{Obs}.
    }
    \label{fig:combined_experiments}
    \vspace{-1.5em}
\end{figure*}

\section{Experiments}
\label{sec:experiments}

We evaluate CC-HGCBF in terms of (i) scalability to larger
heterogeneous teams and denser environments, (ii) zero-shot
generalization to unseen robot models and capability variations, and
(iii) the effects of proactive safety labeling and RCF design choices\footnote{\noindent\textit{{Implementation details:}}
For the main experiments, we use proactive safety-label horizon
$H=32$, labeling gain $\lambda=2.0$, and RCF Fourier order {$K=8$}.
The barrier and policy networks each use one message-passing layer with
128-dimensional messages and $(256,256)$ MLP hidden layers.
The supervisory CBF-QP is solved using
ProxQP~\cite{bambade2022proxQP,so2024jaxproxqp}, and all methods are
implemented in JAX~\cite{jax2018github}.}.


\noindent\textit{\textbf{Heterogeneous mobile robot models.}}
We consider four representative mobile-robot models with distinct
motion constraints: a holonomic double-integrator (DI) and
nonholonomic unicycle (UNI), differential-drive (DD), and car-like
(CL) models~\cite{lynch2017modern_robotics}. All models admit the
shared operational representation in
Sec.~\ref{subsec:operational_space}, while retaining their
model-specific feasible-velocity sets:
\begin{table}[h]
\centering
\label{tab:robot_models}
\footnotesize
\setlength{\tabcolsep}{5pt}
\renewcommand{\arraystretch}{0.99}
\resizebox{0.95\columnwidth}{!}{%
\begin{tabular}{@{}cccc@{}}
\toprule
DI & UNI & DD & CL \\
\midrule
$\begin{aligned}
|v_x|&\le v_{\max}\\
|v_y|&\le v_{\max}
\end{aligned}$
&
$\begin{aligned}
|v|&\le v_{\max}\\
|\omega|&\le \omega_{\max}
\end{aligned}$
&
$\displaystyle
\frac{|v|}{v_{\max}}
+
\frac{|\omega|}{\omega_{\max}}
\le 1$
&
$\displaystyle
|\omega|
\le
\frac{|v|\tan(\phi_{\max})}{l}$
\\
\bottomrule
\end{tabular}%
} \vspace{-1em}
\end{table}
\noindent These models span motion capabilities ranging from holonomic
omnidirectional motion to steering-constrained nonholonomic motion.

\noindent\textit{\textbf{Training and evaluation.}}
Unless otherwise specified, all methods are trained with $N=8$ robots (two each of DI, UNI, DD, and CL) in a
$10\,\mathrm{m}\times10\,\mathrm{m}$ workspace with randomly sampled start--goal navigation scenarios, and evaluated in a
$22\,\mathrm{m}\times22\,\mathrm{m}$ workspace (Fig.~\ref{fig:test_env}).
In \texttt{Free}, the team size increases
from $N=16$ to $128$ without static obstacles. In \texttt{Obs},
$N=32$ is fixed while the number of obstacles increases from
$N_{\mathrm{obs}}=8$ to $32$. Mixed teams contain equal numbers of
the four robot models unless otherwise specified.

For zero-shot evaluation, the training models are modified as described
in Sec.~\ref{subsec:zero_shot_generalization}. For unseen CL, CL is
excluded from training. For unseen FO, training uses DI, UNI, DD, and
CL, while forward-only (FO) robots are introduced only at evaluation.
FO follows the CL steering constraint but additionally satisfies
$0\leq v\leq v_{\max}$, producing an asymmetric feasible-velocity set
absent from training. We also evaluate unseen variations of the CL
steering limit.

Each configuration is evaluated on 32 fixed randomly sampled
start--goal scenarios shared across methods, with a maximum of
$3000$ simulation steps and $\Delta t=0.03\,\mathrm{s}$.
All methods use the same decentralized APF nominal
controller~\cite{khatib1986apf}.
We report \emph{safety rate}, the percentage of robots completing the
rollout without any robot--robot or robot--obstacle separation
violation, and \emph{success rate}, the percentage that remain
collision-free and reach their assigned goals within the rollout
horizon.

\noindent\textit{\textbf{Baselines.}}
We compare RCF-based CC-HGCBF with two graph-CBF baselines and two
representation variants.
\textbf{GCBF+}~\cite{zhang2025gcbf+} is adapted to the shared
operational representation but receives no robot-type or capability
features.
\textbf{hGCBF}~\cite{yang2026hgcbf} uses zero-padding for feature
alignment and predefined robot-type information.
\textbf{One-hot} replaces RCF with a discrete robot-model encoding,
while \textbf{static-RCF} uses the full model-specific feasible-velocity
set without state dependence.
One-hot isolates capability conditioning from discrete type
conditioning, while static-RCF isolates the effect of state dependence.
All methods otherwise use the same training and evaluation protocol.
\vspace{-0.3em}

\subsection{Safe and Scalable Heterogeneous Navigation}
\label{subsec:exp_safe_scale}

We first evaluate navigation using the four robot models observed during
training (DI, UNI, DD, and CL). As shown in
Fig.~\ref{fig:main_experiments}, the CC-HGCBF variants consistently
outperform GCBF+ and hGCBF as team size and obstacle density increase.
In \texttt{Free} with $N=128$, RCF achieves $95.2\%$ success and
$98.9\%$ safety, while both baselines degrade substantially,
particularly in success rate. The same trend holds in the most
cluttered \texttt{Obs} setting ($N_{\mathrm{obs}}=32$), where RCF
maintains high safety and goal-reaching performance despite increased
interactions. Differences among RCF, static-RCF, and One-hot remain
comparatively small when all robot models are observed during training.
These results indicate that the shared operational representation alone
is insufficient for scalable heterogeneous coordination, while
robot-specific heterogeneity information becomes increasingly beneficial
as interaction density grows. The relatively small differences among
RCF, static-RCF, and One-hot in this seen-model setting motivate the
zero-shot evaluations below.

\subsection{Zero-Shot Generalization to Unseen Capabilities}
\label{subsec:zero_shot_generalization}


We evaluate zero-shot generalization to robot models and motion
capabilities unseen during training: CL, whose steering capability is
strongly speed-dependent; forward-only (FO), whose feasible motion is
asymmetric; and unseen CL steering limits. Policies are evaluated
without retraining or test-time adaptation. We consider only
heterogeneous-team methods. For discrete-identity baselines, the
held-out model's identity feature is inactive during training and
activated only at evaluation

\subsubsection{Generalization to Unseen Robot Models}
\label{subsubsec:unseen_model}

We first hold out CL during training and train all methods using only
DI, UNI, and DD. Unlike these training models, CL has a strongly
speed-dependent steering constraint, causing its turning capability to
vary substantially with longitudinal velocity. As shown in
Fig.~\ref{fig:heldout_generalization}
(\subref{fig:heldout_cl_all_free})--(\subref{fig:heldout_cl_all_obs}),
RCF maintains substantially higher success and safety as team size and
obstacle density increase, with larger gains on the held-out CL robots
(\subref{fig:heldout_cl_only_free})--(\subref{fig:heldout_cl_only_obs}).
At $N=128$ in \texttt{Free}, RCF improves overall success and safety
over static-RCF by $10.8$ and $13.1$, respectively;
for the held-out CL robots, the gains increase to $14.7$ and $23.4$\%. RCF further outperforms the type-conditioned One-hot and hGCBF
baselines by $32.5$--$32.7$\% in success and $41.7$--$41.9$\%
in safety on the held-out CL robots.

We next introduce FO robots only at evaluation after training on DI,
UNI, DD, and CL. FO retains the steering constraint of CL but further
restricts longitudinal velocity $\in[0,v_{\max}]$,
producing a
forward--backward asymmetric capability absent from training. As shown
in Fig.~\ref{fig:heldout_generalization}
\subref{fig:heldout_fo_free} and \subref{fig:heldout_fo_obs16},
RCF maintains high safety on the unseen FO robots. In the most
challenging obstacle setting with $N=30$, RCF improves safety over
static-RCF by $15.6$\% while maintaining comparable
success. Together with the held-out CL results, this demonstrates
zero-shot transfer to qualitatively different unseen motion
capabilities, not only unseen robot identities.

\subsubsection{Generalization to Parametric Capability Variations}
\label{subsubsec:parametric_generalization}

We further evaluate parametric capability generalization by varying the
maximum steering angle of CL. The policy is trained only at
$\phi_{\max}=40^\circ$ and evaluated without retraining at
$\phi_{\max}\in\{30^\circ,40^\circ,50^\circ\}$.
As shown in Table~\ref{tab:rcf_phimax_scaling}, RCF remains robust to
unseen steering limits: relative to the training setting, success
changes by at most $2.9$\% and safety by at most $1.5$\% across all evaluated team sizes and obstacle densities.
Together with the held-out CL results, these results demonstrate
zero-shot transfer across unseen velocity-constraint geometries. \vspace{-3pt}

\begin{table}[h]
\centering
\caption{RCF generalization to maximum steering-angle variations across team sizes and obstacle densities. Trained only at $\phi_{\max}=40^\circ$ ($\dagger$), the policy is evaluated without retraining. Entries report Success\,/\,Safety (\%).}
\label{tab:rcf_phimax_scaling}
\footnotesize
\setlength{\tabcolsep}{4.0pt}
\renewcommand{\arraystretch}{0.7}

\begin{tabular}{ccccc}
\toprule
\multirow{2}{*}{Env.}
& \multirow{2}{*}{Setting}
& \multicolumn{3}{c}{Maximum steering angle, $\phi_{\max}$} \\
\cmidrule(lr){3-5}
& & $30^\circ$ & $40^\circ{}^{\dagger}$ & $50^\circ$ \\
\midrule

\multirow{5}{*}{\texttt{Free}}
& $N=16$  & 99.0 / 100.0 & 98.8 / 100.0 & 99.8 / 100.0 \\
& $N=32$  & 99.1 / 100.0 & 99.7 / 100.0 & 99.9 / 100.0 \\
& $N=64$  & 98.0 / 99.4  & 99.1 / 99.7  & 99.5 / 100.0 \\
& $N=96$  & 96.0 / 99.0  & 97.2 / 99.1  & 98.6 / 99.9 \\
& $N=128$ & 92.5 / 97.6  & 95.2 / 98.9  & 97.3 / 99.2 \\
\midrule

\multirow{4}{*}{\texttt{Obs}}
& $N_{\mathrm{obs}}=8$  & 95.5 / 98.6 & 97.0 / 99.4 & 98.1 / 99.8 \\
& $N_{\mathrm{obs}}=12$ & 92.5 / 97.5 & 95.2 / 99.0 & 96.0 / 98.8 \\
& $N_{\mathrm{obs}}=16$ & 91.4 / 97.8 & 92.9 / 98.9 & 94.5 / 98.6 \\
& $N_{\mathrm{obs}}=32$ & 79.4 / 96.4 & 82.3 / 95.3 & 85.2 / 96.5 \\
\bottomrule
\end{tabular}
\vspace{-1.5em}
\end{table}

\subsection{Ablation and Analysis}
\label{subsec:ablation}








\subsubsection{Proactive Safety Labeling and Horizon Sensitivity}
\label{subsubsec:labeling_ablation}

We compare the proactive safety labeling in
Sec.~\ref{subsec:safety_labeling} with the velocity-cone-based rule
used in the GCBF+ implementation~\cite{gcbfplus_code},
while keeping RCF and all other training settings fixed.
The velocity-cone rule identifies potentially unsafe interactions using
a prescribed warning region and collision cone, whereas our labeling
uses separation and relative closing velocity without an explicit
warning region.

Figure~\ref{fig:horizon_sweep} compares the two labeling schemes for
$H\in\{0,32,64\}$. In \texttt{Free} with $N=128$, proactive labeling
improves success by $26.2$, $28.5$, and $31.0$ \% for
$H=0$, $32$, and $64$, respectively. With
$N_{\mathrm{obs}}=32$, the corresponding gains are $20.9$, $14.6$,
and $21.1$ points, while safety is consistently higher across all
horizons. These results show that the benefit of proactive labeling is
robust to the choice of $H$, with substantial improvements in
goal-reaching performance under dense interactions.

\subsubsection{Sensitivity to Fourier Order}
\label{subsubsec:k_ablation}

We further evaluate the sensitivity of RCF to the Fourier order $K$.
Table~\ref{tab:k_ablation} reports performance on seen-model and
held-out FO and CL evaluations, averaged over five training seeds.
Performance varies modestly across $K$: the maximum success-rate
variation is $0.7$ \% for held-out FO and $2.1$ points
for held-out CL, with no consistent benefit from increasing $K$.
We therefore use {$K=8$} in the main experiments to balance directional
resolution and representation size. \vspace{-6pt}

\begin{table}[h]
\centering
\caption{
Ablation on Fourier order $K$.
Performance (\%) is averaged over the corresponding evaluation
configurations and five training seeds; $\Delta_K$ denotes the range
across $K$.
}
\label{tab:k_ablation}
\footnotesize
\setlength{\tabcolsep}{4.5pt}
\renewcommand{\arraystretch}{1.08}
\begin{tabular}{llccccc}
\toprule
Evaluation & Metric
& $K=4$ & {$K=8$} & $K=12$ & $K=16$
& $\Delta_K$ \\
\midrule
\multirow{2}{*}{Seen models}
& Success & 94.8 & 94.5 & 94.7 & 94.9 & 0.4 \\
& Safety  & 98.2 & 97.7 & 97.7 & 97.5 & 0.7 \\
\midrule
\multirow{2}{*}{Held-out FO}
& Success & 94.5 & 95.0 & 95.0 & 95.2 & 0.7 \\
& Safety  & 98.3 & 98.2 & 98.1 & 97.7 & 0.6 \\
\midrule
\multirow{2}{*}{Held-out CL}
& Success & 82.4 & 80.9 & 80.6 & 82.7 & 2.1 \\
& Safety  & 85.1 & 86.1 & 86.1 & 86.0 & 1.0 \\
\bottomrule
\end{tabular}
\vspace{-1.5em}
\end{table}

\begin{figure*}[!tp]
    \centering

    \includegraphics[width=0.5\textwidth]{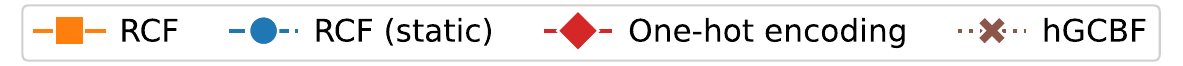}
    \par\vspace{0.3em}

    \begin{subfigure}[t]{0.34\textwidth}
        \centering
        \includegraphics[width=\linewidth]
        {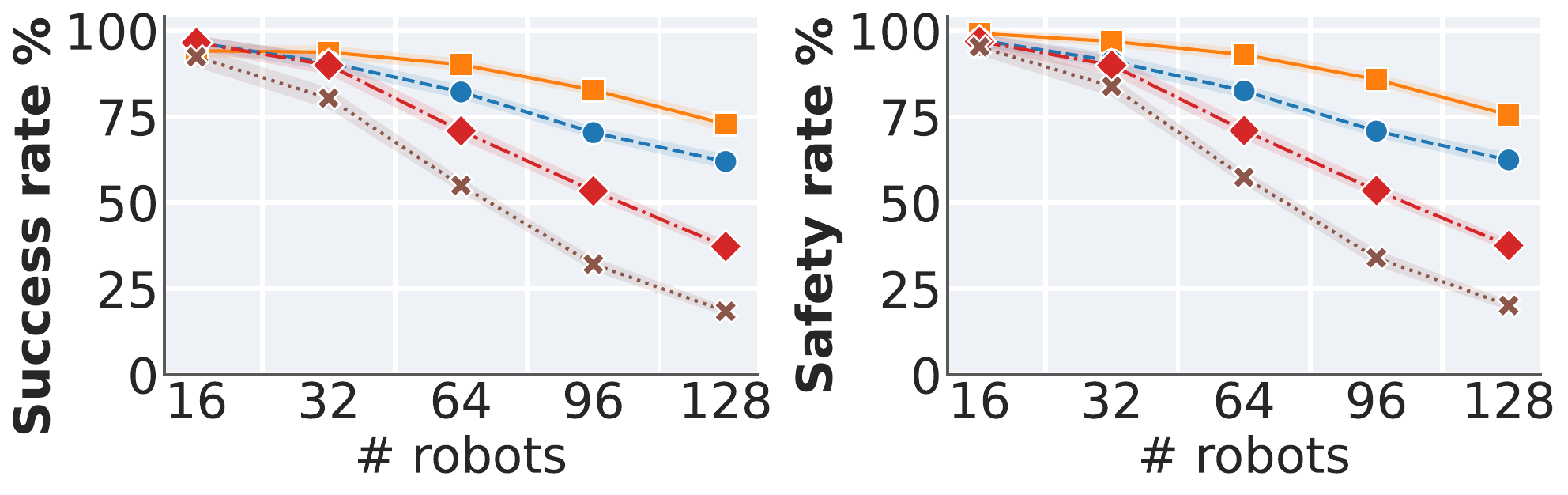}
        \caption{All agents, held-out CL, \texttt{Free}}
        \label{fig:heldout_cl_all_free}
    \end{subfigure}%
    \hfill%
    \begin{subfigure}[t]{0.34\textwidth}
        \centering
        \includegraphics[width=\linewidth]
        {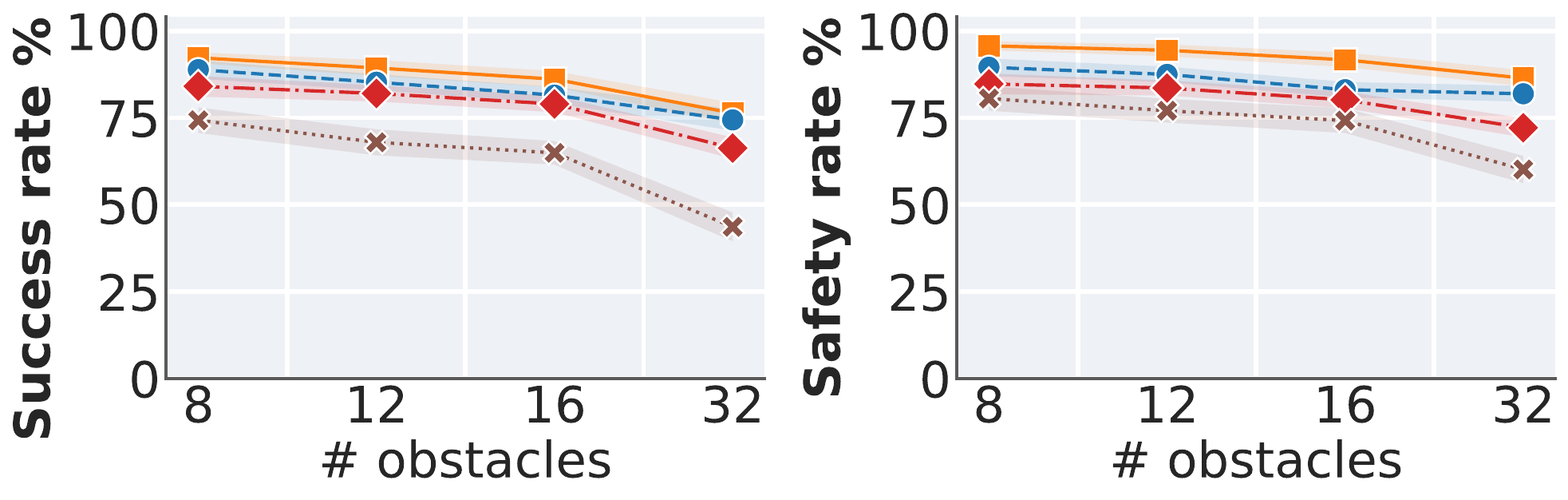}
        \caption{All agents, held-out CL, \texttt{Obs}}
        \label{fig:heldout_cl_all_obs}
    \end{subfigure}%
    \hfill%
    \begin{subfigure}[t]{0.29\textwidth}
        \centering
        \includegraphics[width=\linewidth]
        {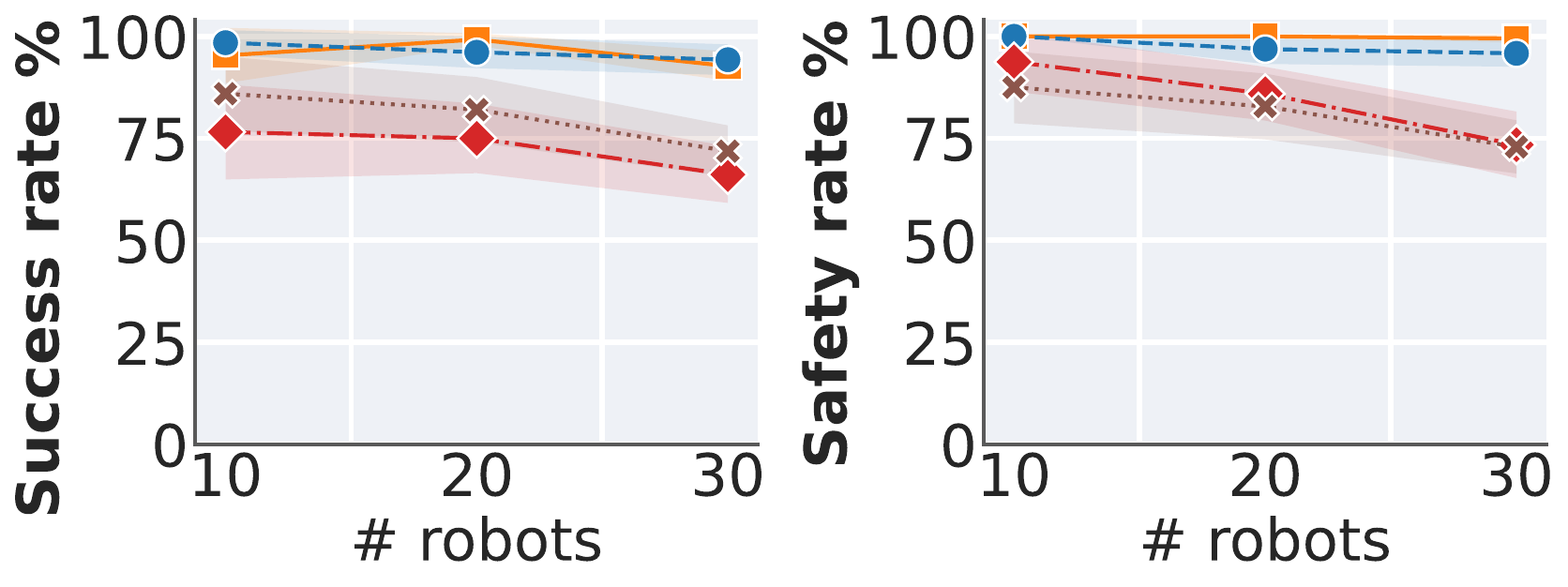}
        \caption{Held-out FO only, \texttt{Free}}
        \label{fig:heldout_fo_free}
    \end{subfigure}

    \vspace{0.1em}

    \begin{subfigure}[t]{0.34\textwidth}
        \centering
        \includegraphics[width=\linewidth]
        {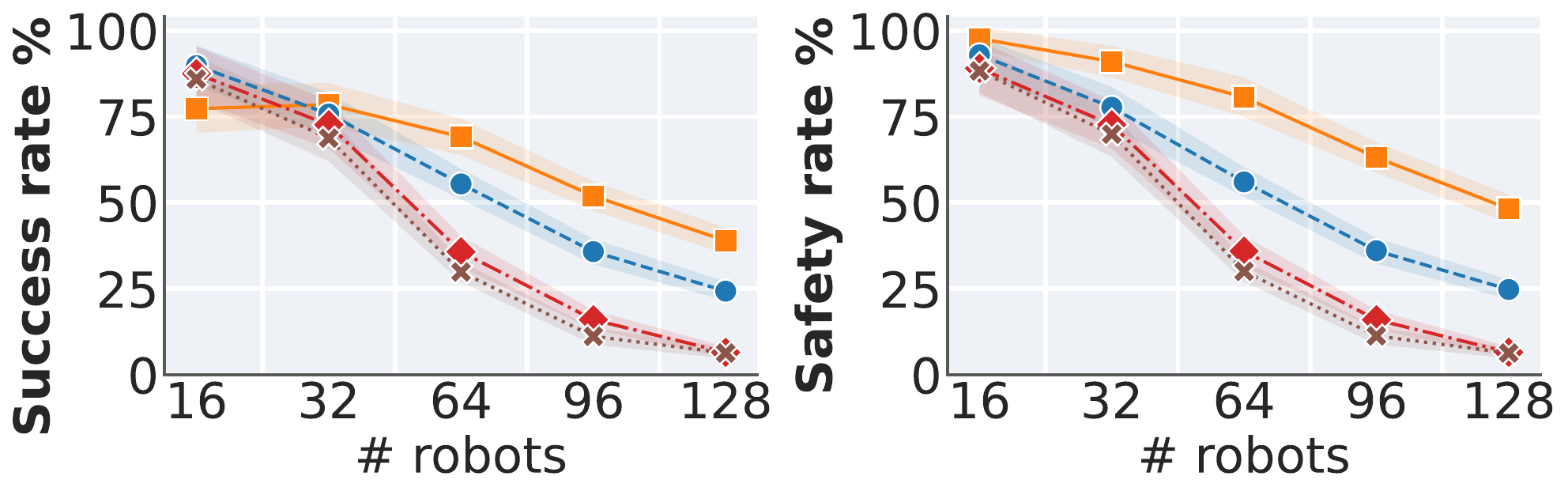}
        \caption{Held-out CL only, \texttt{Free}}
        \label{fig:heldout_cl_only_free}
    \end{subfigure}%
    \hfill%
    \begin{subfigure}[t]{0.34\textwidth}
        \centering
        \includegraphics[width=\linewidth]
        {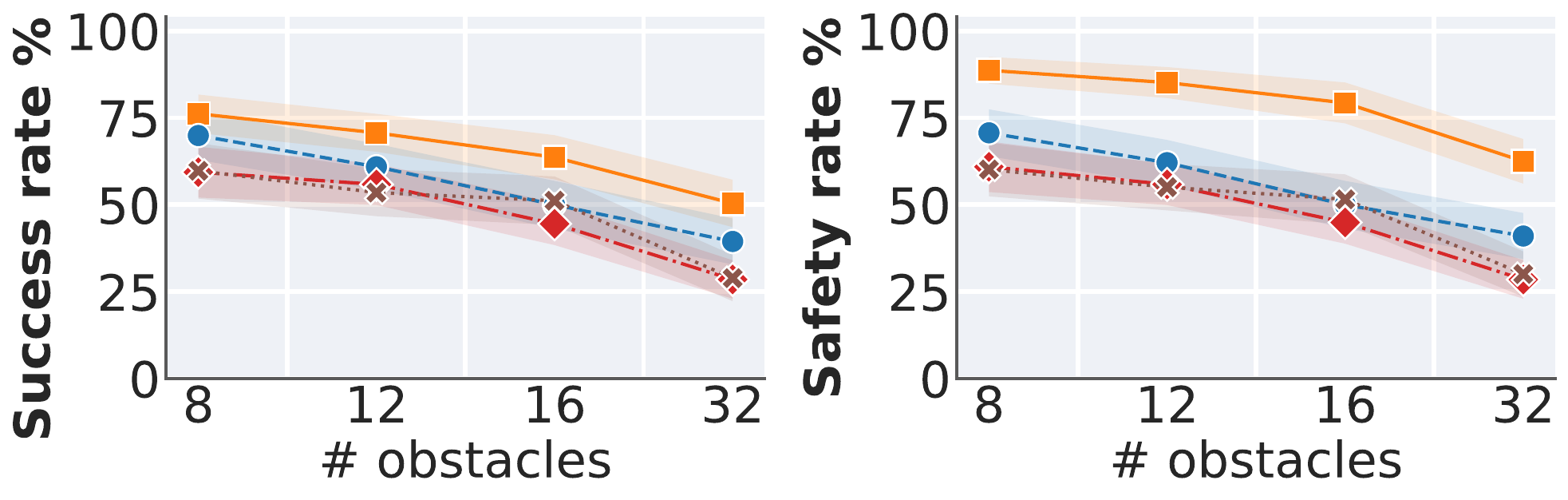}
        \caption{Held-out CL only, \texttt{Obs}}
        \label{fig:heldout_cl_only_obs}
    \end{subfigure}%
    \hfill%
    \begin{subfigure}[t]{0.29\textwidth}
        \centering
        \includegraphics[width=\linewidth]
        {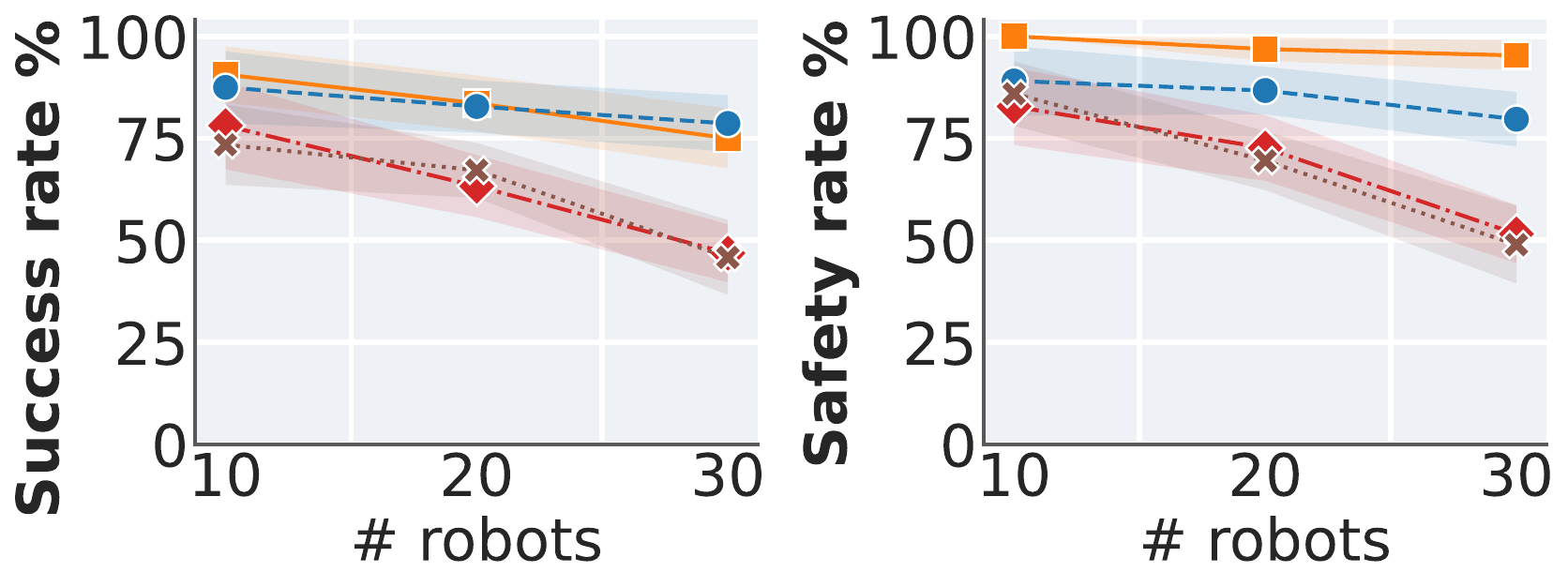}
        \caption{Held-out FO only, 16 obstacles}
        \label{fig:heldout_fo_obs16}
    \end{subfigure}

    \caption{
    Zero-shot generalization to unseen CL and FO models.
    For unseen CL, all methods are trained on DI, UNI, and DD;
    \subref{fig:heldout_cl_all_free}--\subref{fig:heldout_cl_all_obs}
    report performance over all agents, whereas
    \subref{fig:heldout_cl_only_free}--\subref{fig:heldout_cl_only_obs}
    report performance for the unseen CL agents only.
    For unseen FO, methods are trained on DI, UNI, DD, and CL;
    \subref{fig:heldout_fo_free} and \subref{fig:heldout_fo_obs16}
    report performance for the unseen FO agents only. Shaded regions denote variance.
    }
    \label{fig:heldout_generalization}
    \vspace{-1.8em}
\end{figure*}

\subsection{Physical Robot Validation}
\label{phys_robot}

We further validate zero-shot deployment of CC-HGCBF on five DJI
RoboMaster S1 robots, with model-specific kinematic and control
constraints enforced through their respective control interfaces.
We consider unseen CL and FO settings, each with and without static
obstacles, and compare RCF with One-hot and hGCBF.
Figure~\ref{fig:qualitative} shows the unseen FO setting with obstacles:
RCF produces collision-free trajectories, whereas hGCBF and One-hot
result in robot--obstacle and robot--robot collisions, respectively.
These results demonstrate zero-shot physical deployment to unseen motion
capabilities without retraining.
\vspace{-3pt}















\section{Conclusion}
\label{sec:conclusion}

We presented CC-HGCBF, a capability-conditioned graph CBF framework for
safe decentralized navigation of heterogeneous mobile-robot teams. By
combining a shared second-order operational-space representation with the
state-dependent RCF, the framework enables a single graph barrier and
policy to reason about heterogeneous motion capabilities without relying
on predefined robot identities. Experiments demonstrate scalable
coordination and zero-shot transfer to unseen robot models and capability
variations, with additional validation on physical robots.
Future work will extend the framework to more general robotic systems
and operational spaces, investigate guarantees for deadlock-free
coordination, and develop formal safety guarantees for learned policies
under finite control-update intervals.

%
\bibliographystyle{IEEEtran} \footnotesize
\bibliography{references}

\end{document}